\newcommand{\R}{{\mathbb R}}
\newcommand{\matvec}{\mbox{vec}}
\newcommand{\defeq}{\vcentcolon=}
\newcommand{\ones}{\mathbf{1}}
\DeclareMathOperator*{\argmin}{arg\,min}
\newcommand{\task}[1][i]{{\mathcal{T}_{#1}}} % Task, optionally takes index
\newcommand{\tasks}{\{ \task \}_{i=1}^N}
\newcommand{\losst}[1][i]{{l_{#1}}}
\newcommand{\lossv}[1][i]{{l_{#1}^{\textrm{val}}}}
\newcommand{\tasktarget}{{\mathcal{T}_{\textrm{target}}}}
\newcommand{\lossttarget}{l_{\textrm{target}}}
\newcommand{\lossvtarget}{l_{\textrm{target}}^{\textrm{val}}}
\newcommand{\lossttargetit}{l_{\textrm{target}}^{(k)}}
\newcommand{\losstotal}{l^{\textrm{total}}}
\newcommand{\lossopt}{l^*}
\newcommand{\thetait}[2]{\theta_{#1}^{(#2)}}
\newcommand{\phit}[1]{\phi^{(#1)}}
\newcommand{\hist}[2]{S_{#1}^{(#2)}}
\newcommand{\grad}[2]{G_{#1}^{(#2)}}
\newcommand{\Alg}{\textup{\textbf{Opt}}}
\newcommand{\MetaAlg}{\textup{\textbf{MetaOpt}}}
\newtheoremstyle{mytheoremstyle} % name
    {\topsep}                    % Space above
    {\topsep}                    % Space below
    {\itshape}                   % Body font
    {}                           % Indent amount
    {\scshape}                   % Theorem head font
    {.}                          % Punctuation after theorem head
    {.5em}                       % Space after theorem head
    {}  % Theorem head spec (can be left empty, meaning ‘normal’)
\theoremstyle{mytheoremstyle}
\theoremstyle{plain}
\newtheorem{theorem}{Theorem}
\theoremstyle{remark}
\newtheorem{remark}{Remark}
\title{\LARGE \bf
    Meta-Learning Parameterized First-Order Optimizers using Differentiable Convex Optimization
}
\author{Tanmay Gautam, Samuel Pfrommer, Somayeh Sojoudi
%\thanks{This work was not supported by any organization}% <-this % stops a space
    \thanks{All authors are with the Department of Electrical Engineering and Computer Sciences, University of California Berkeley, Berkeley, CA, 94720.
    {\tt\small tgautam23@berkeley.edu}; {\tt\small sam.pfrommer@berkeley.edu}*; {\tt\small sojoudi@berkeley.edu}}
}
\begin{document}

\maketitle
\thispagestyle{empty}
\pagestyle{empty}

%%%%%%%%%%%%%%%%%%%%%%%%%%%%%%%%%%%%%%%%%%%%%%%%%%%%%%%%%%%%%%%%%%%%%%%%%%%%%%%%
\begin{abstract}
Conventional optimization methods in machine learning and controls rely heavily on first-order update rules. Selecting the right method and hyperparameters for a particular task often involves trial-and-error or practitioner intuition, motivating the field of meta-learning. We generalize a broad family of preexisting update rules by proposing a meta-learning framework in which the inner loop optimization step involves solving a differentiable convex optimization (DCO). We illustrate the theoretical appeal of this approach by showing that it enables one-step optimization of a family of linear least squares problems, given that the meta-learner has sufficient exposure to similar tasks. Various instantiations of the DCO update rule are compared to conventional optimizers on a range of illustrative experimental settings.
\end{abstract}

%%%%%%%%%%%%%%%%%%%%%%%%%%%%%%%%%%%%%%%%%%%%%%%%%%%%%%%%%%%%%%%%%%%%%%%%%%%%%%%%
\section{Introduction}\label{sec:introduction}
First-order optimization methods underpin a wide range of modern control and machine learning (ML) techniques. The field of deep learning, including domains such as computer vision \cite{alexnet,simonyan2015deep}, natural language processing \cite{nlp_attention}, deep reinforcement learning \cite{Sutton+Barto:1998}, and robotics \cite{Snderhauf2018TheLA}, has yielded revolutionary results when trained with variants of gradient descent such as stochastic gradient descent (SGD) \cite{sgd} and Adam \cite{Kingma2015AdamAM}. Algorithms like projected and conditional gradient descent extend the class of first-order methods to accommodate problems with constraints such as matrix completion, or training well-posed implicit deep models \cite{ghaoui2020implicit,gautam2022sequential-long}.
%The class of proximal gradient descent algorithms has found application in the context of regularized least squares problems like LASSO \cite{fista}.

While this proliferation of methods has facilitated rapid advances across the control and ML communities, designing update rules tailored to specific problems still remains a challenge. This challenge is exacerbated by the fact that different domains are tasked with solving distinct problem types. The deep learning community is, for instance, tasked with solving high-dimensional non-convex problems whereas the optimal control community often deals with constrained convex problems where the constraints encode restrictions on the state space and system dynamics. Moreover, even different problem instances within a particular problem class may require significantly varying update rules. As an example, within deep learning, effective hyperparameter (e.g. learning rate) selection for algorithms such as Adam and SGD is highly dependent on the underlying model that is to be trained. 
%In such cases, practitioners are restricted to resorting to rules of thumb or hyperparameter sweeps.

\subsection{Contributions}\label{ssec:contributions}
This work proposes a new data-driven approach for optimization algorithm design based on differentiable convex optimization (DCO). This approach enables the use of previous optimization experience to propose update rules that efficiently solve new optimization tasks sampled from the same underlying problem class. We start by introducing the notion of DCO as a means to parameterize optimizers within the meta-learning framework. We then propose an efficient instantiation of meta-training that can be leveraged by the DCO optimizer to learn appropriate meta-parameters. To illustrate the generality of the DCO meta-learning framework, we then formulate concrete differentiable quadratic optimizations to solve unconstrained optimization problems, namely, DCO Gradient (DCOG), DCO Momentum (DCOM) and DCO General Descent (DCOGD). These DCO instantiations are generalizations of existing first-order update rules, which in turn demonstrates that existing methods can be thought of special cases of the DCO meta-learning framework.

DCO also provides sufficient structure conducive to rigorous theoretical analysis for the meta-learning problem. We establish convergence guarantees for the DCOGD optimizer to the optimal first-order update rule that leads to one step convergence when considering a family of linear least squares (LS) problems. Finally, we illustrate the potential of our proposed DCO optimizer instantiations by comparing convergence speed with popular existing first-order methods on illustrative regression and system identification tasks.

\subsection{Related Works}\label{ssec:relatedworks}
\subsubsection{Meta-Learning}\label{sssec:meta}
Deep learning has been shown to be particularly performant in scenarios where there is an abundance of training data and computational resources \cite{alexnet, Goodfellow-et-al-2016, nlp_attention, silver2017mastering}. This, however, excludes many important applications where there is an inherent lack of data or where computation is very expensive. Meta-learning attempts to address this issue by gaining learning process experience on similarly structured tasks \cite{mtlsurvey}. This learning-to-learn paradigm is aligned with the human and animal learning process which tends to improve with greater experience. Moreover, by making the learning process more efficient meta-learning targets the aforementioned issues of data and compute scarcity.

Meta-learning methods can be categorized into three broad classes. In \cite{grefenstette2019generalized}, authors introduce a unifying framework that encapsulates a wide class of existing approaches. 

Optimizer-focused methods aim to improve the underlying optimizer in the inner loop used to solve the tasks at hand by meta-learning optimizer initialization or hyperparameters. Within few-shot learning, Model Agnostic Meta Learning (MAML) and its variants use prior learning experience to meta-learn a model/policy initialization that requires just a few inner gradient steps to adapt to a new task \cite{maml, howtotrainmaml}. Other works aim to meta-learn optimizer hyperparameters. In \cite{lrmtl, improvemaml}, authors attempt to identify optimal learning rate scheduling strategies. Another strategy within this category is to directly learn a parameterization of the optimizer. Due to the sequential structure of inner loop parameter updates, recurrent architectures have been considered in this space \cite{Hochreiter2001LearningTL, learningtolearnbygd}. The inner loop optimization has also been viewed as a sequential decision-making problem and consequently optimizers have also been characterized as policies within an RL setting \cite{learningtooptimize}. 

Black-box methods represent the inner loop via a forward-pass of a single model. The learning process of the inner loop is captured by the activation layers of the underlying model. The inner loop learning can be instantiated as RNNs \cite{Ravi2016OptimizationAA, l2lbygdoriginal}, convolutional neural networks (CNN) \cite{Mishra2017ASN} or hyper-networks \cite{hypernetwork}. The meta-learning loop finds the hyperparameters of the inner loop network yielding good performance. 

In non-parametric methods, the inner loop aims to identify feature extractors that enable the matching of validation and training samples to yield an accurate prediction using the matched training label. The meta-loop aims to identify the class of feature extractors that transform the data samples into an appropriate space where matching is viable \cite{NIPS2016_90e13578, snell}. 

\subsubsection{Implicit Layers}\label{sssec:implicit}
Recent work has proposed a novel viewpoint wherein deep learning can be instantiated using implicit prediction rules rather than as conventional explicit feedforward architectures \cite{ghaoui2020implicit, chen2018neural, bai2019deep}. In \cite{ghaoui2020implicit} and \cite{bai2019deep} authors formalize how deep equilibrium models, characterized by nonlinear fixed point equations, represent weight-tied, infinite-depth networks. In this framework, \cite{ghaoui2020implicit} demonstrates how the aforementioned  models are able to generalize most of the popular deep learning architectures. In \cite{chen2018neural}, authors propose neural ordinary differential equations (ODE): an alternative instantiation of an implicit layer where the layer output is the solution to an ODE. This is shown to be an expressive model class yielding particularly impressive results when processing sequential data. Implicit layers have also been characterized as differentiable optimization layers. The work \cite{amos2017optnet} introduces differentiable quadratic optimization (QP) layers that can be incorporated within deep learning architectures. In \cite{cvxpylayers2019} authors develop software to differentiate through defined convex optimization problems. Some notable applications of differentiable optimization layers include parameterizing model predictive control policies \cite{NEURIPS2018_ba6d843e} and representing a maximum satisfiability (MAXSAT) solver \cite{Wang2019SATNetBD}. 
\subsection{Notation}\label{ssec:notation}
Throughout this work, we consider the problem of solving a task $\task[]$ which consists of an optimization problem and an evaluation step. The optimization problems are characterized with a loss function $\losst[](\theta)$ over decision variables $\theta$ belonging to some parameter space $\Theta\subseteq \R^p$. For evaluation, we denote a validation criterion $\lossv[]$ that assesses the optimizer $\theta^\star$ found in the associated problem. We refer to solving the optimization over $\theta$ as the \textit{inner loop} problem. At the meta-level we consider an algorithm $\Alg(\ \cdot\ ; \phi):\Theta\rightarrow \Theta$ with meta-parameters $\phi$ which generates a sequence of parameter updates using first-order information to solve the inner loop problem. We denote the horizon of the parameter update sequence by $T$. By meta-training, we refer to the optimization over the meta-parameters over a training set of $N$ tasks $\{\task\}_{i=1}^N$. For a vector-valued function $f(x): \R^d\rightarrow \R^p$ we let the operator $\nabla_x f(\cdot): \R^d\rightarrow \R^{d\times p}$ denote the gradient. If $f: \R^d \rightarrow \R$ is a scalar function, the Hessian of $f$ is denoted by $\nabla^2_x f(\cdot): \R^d \rightarrow \R^{d \times d}$. We denote that a square symmetric matrix $A$ is positive definite (all eigenvalues strictly positive) by $A \succ 0$. The vectorization of a matrix $A \in \R^{m \times n}$ is denoted by $\matvec(A) \in \R^{mn}$ and is constructed by stacking the columns of $A$. The Kronecker product of two matrices $A$ and $B$ is denoted $A \otimes B$. For a vector $x\in\R^n$ and $p\geq 1$, $\|x\|_p$ denotes the $\ell_p$-norm of $x$. For $m \in \mathbb{N}_+$, we define $[m]$ to be the set $\{a \in \mathbb{N}_+ \mid a \leq m\}$, where $\mathbb{N}_+$ is the set of positive integer numbers. We define the operator $\odot$ as an elementwise multiplication. $\mathcal{U}(a, b)$ denotes the uniform probability distribution with support $[a, b]$ and $\mathcal{N}(\mu, \sigma^2)$ represents a univariate normal distribution centered at $\mu$ with standard deviation $\sigma$. Finally, we define $E_{\mathcal{D}}[\cdot]$ as the expectation operator  over distribution $\mathcal{D}$.

\section{Background}\label{sec:background}
This section contextualizes our proposed framework. Section~\ref{ssec:firstordermethods} illustrates how conventional first-order update rules can be typically expressed as the solution to a convex optimization problem. Section~\ref{ssec:optlayers} then elaborates on the differentiable convex optimization methods that can be used to differentiate through the aforementioned inner loop gradient steps to update meta-parameters.

\subsection{First-order methods}\label{ssec:firstordermethods}
We consider a generic unconstrained optimization problem
\begin{align}\label{eq:optbasic_convex}
    \min_{x} f(x)
\end{align}
with differentiable objective $f$. First-order methods are a popular means to solve optimization problems of the form \eqref{eq:optbasic_convex}. The first-order property refers to the underlying methods' use of gradient information to generate a sequence of parameter iterates. Next we briefly survey a subset of important first-order methods that solve optimization problems of the form \eqref{eq:optbasic_convex}. We highlight how the update rules of these algorithms can be formulated as convex optimization problems themselves. This motivates the formulation of a generic parameterized convex optimizer to yield optimal parameter updates. 

%\textcolor{red}{Change proximal gradient and mirror to one-line sentence, and elaborate more on gradient descent. Write regular gradient descent update rule then compare to (2).}
\subsubsection{Gradient descent}\label{sssec:gd}
Gradient descent (GD) is a standard first-order method used to solve a variety of unconstrained optimization problems. For an unconstrained optimization problem, GD updates aim to reconcile the notion of minimizing a linear approximation of the objective while simultaneously maintaining proximity to the current parameter iterate. This can be cast as the convex optimization
\begin{align}\label{eq:gd_update}
    x^{(t+1)} = \argmin_x \{\nabla f(x^{(t)})^\top (x-x^{(t)})+\frac{\lambda}{2}||x-x^{(t)}||_2^2\}
\end{align}
where $\lambda>0$ is the step size. Solving \eqref{eq:gd_update} in closed-form yields the well-known GD update.

\subsubsection{Gradient descent with Momentum}\label{sssec:gdmomentum}
A popular practical variation of GD is to utilize the history first-order information within the parameter update rule. This is referred to as GD with momentum. The contribution of historic first-order information is captured by the notion of a state. More concretely, we define state update for $t>1$ as
\begin{align}
    \hist{}{t+1} = \beta \hist{}{t} + (1-\beta) \nabla f(x^{(t)}),    
\end{align}
where $\beta\in [0, 1]$ is an averaging parameter and we initialize $\hist{}{1}:=\nabla f(x^{(1)})$. The convex update rule in this method substitutes $\nabla f(x^{(t)})$ with $\hist{}{t+1}$:
\begin{align}\label{eq:gdmomentum_update}
    x^{(t+1)} = \argmin_x \{(\hist{}{t+1})^\top (x-x^{(t)})+\frac{\lambda}{2}||x-x^{(t)}||_2^2\}
\end{align}

Other notable first-order methods whose updates are defined via convex optimization problems are the proximal gradient (PG) \cite{boyd2004convex, Wright-Ma-2021} and mirror descent (MD) \cite{BECK2003167, doi:10.1137/1027074} methods. The former addresses unconstrained nondifferentiable problems whose objective is a composite function that can be decomposed into the sum of a differentiable and nondifferentiable part. The latter targets potentially constrained problems with updates that simultaneously minimize a linear approximation of the objective and a proximity term between parameter updates.

\subsection{Differentiable Optimization Layers}\label{ssec:optlayers}
We now present the formulation for a general DCO \cite{cvxpylayers2019}:
\begin{align}\label{eq:dco}
    D(x ; \phi):=\argmin_{y\in\R^n}\quad &f_0(x, y;\phi)\nonumber\\
    \textrm{s.t.}\quad &f_i(x,y;\phi)\leq 0 \quad \textrm{for } i\in[q],\nonumber\\
    &g_j(x,y;\phi) = 0 \quad \textrm{for } j\in[r],
\end{align}
where $x\in\R^d$ is the optimization input and $y\in\R^n$ is the solution. Here optimization parameters are defined by a vector $\phi$. The functions $f_i$ parameterize inequality constraint functions which are convex in $y$ and $g_j$ parameterize affine equality constraints. As with the constraint functions, the objective $f_0$ is convex in the optimization variable $y$.

Note that this formulation defines a general parameterized convex optimization problem in the output $y$. The solution to the optimization is a function of the input $x$.

When embedding DCO as a layer within the deep learning context, we require the ability to differentiate through $D$ with respect to $\phi$ when performing backpropagation. This is achieved via implicit differentiation through the Karush-Kuhn-Tucker (KKT) optimality conditions as proposed in \cite{amos2017optnet, amos2017icnn}. Particular instantiations of DCO, such as parameterized QPs, can enable more efficient backpropagation of gradients \cite{amos2017optnet}. 
\section{Meta-Optimization Framework}
Consider the setting where we have $N$ \textit{training} tasks $\task = (\losst, \lossv)$ for $i\in [N]$, where each task consists of a tuple containing a training loss function $\losst$ and an associated performance metric $\lossv$. Each of these tasks is sampled from an underlying distribution $\mathcal{D}$, i.e $\task \sim \mathcal{D}\ \forall i\in[N]$. For task $\task$, we consider the optimization 
\begin{align}\label{eq:optbasic}
    \min_{\theta_i \in\Theta} \losst(\theta_i)
\end{align}
where we aim to minimize loss $\losst$ over the decision variable $\theta$ constrained to the set $\Theta\subset \R^p$. We let $\phi$ denote the set of meta-parameters that configure the method used to solve optimization \eqref{eq:optbasic}, e.g. $\phi$ could include the learning rate in a gradient-based algorithm. The validation loss $\lossv$ is used to evaluate the final $\theta^\star_i$ recovered from solving \eqref{eq:optbasic}. As motivation for this setup, we consider the general training-validation procedure seen in ML. Here $\losst$ can be seen as the loss on training data with respect to model parameters $\theta$ and $\lossv$ denotes the loss on validation data. Note that for problems where the metric of interest is in fact the objective of \eqref{eq:optbasic}, we can trivially define $\lossv:=\losst$.
                                 
In this meta-learning framework, the goal is to perform well on a new task $\tasktarget = (\lossttarget, \lossvtarget) \sim \mathcal{D}$ using previous experience from tasks $\tasks$. Since $\tasktarget$ is sampled from the same distribution $\mathcal{D}$ as the training tasks, it has structural similarities that can be exploited by meta-learning.

\subsection{Inner Optimization Loop}\label{ssec:training_loop}
Depending on the structure of \eqref{eq:optbasic}, several iterative methods exist to solve the considered problem. The chosen algorithm has an update rule that yields a sequence of parameter updates $\{\thetait{i}{t}\}_{t=1}^{T}$ where $T$ is defines the total number of updates and $i$ indexes the associated task $\task$. Within the class of first-order methods, these update rules require computing or estimating (e.g. within reinforcement learning) the gradient $\grad{i}{t}:=\nabla_{\theta}l_i(\thetait{i}{t})$ to solve the inner optimization of task $\task$. The algorithm $\Alg$ applies the computed first-order and zeroth-order information at time step $t$ along with an abstraction of past information encapsulated by state $\hist{}{t}$ to yield both an updated parameter $\thetait{i}{t+1}$ and state $\hist{}{t+1}$: 

%\textcolor{red}{Shouldn't this take the feature map $d$ instead? For stuff that depends on history, etc.}\textcolor{green}{I'm keeping this general here. It could use what you mention but the notation is simply saying to take in zeroth and first order info. We don't want to explicitly introduce a feature map in the general meta-learning intro yet.}

\begin{align}\label{eq:update_rule}
(\thetait{i}{t+1}, \hist{i}{t+1}) :=\Alg(\thetait{i}{t}, \hist{i}{t}, \grad{i}{t}; \phi).
\end{align}

Here we characterize the optimizer with meta-parameters $\phi$. Solving the inner loop problem to completion involves recursively applying \eqref{eq:update_rule} $T$ times from an initial condition $\thetait{i}{1}$ and history state $\hist{i}{1}$, which we denote by $\Alg_T(\thetait{i}{1}, \hist{i}{1}; \phi)$. Note that moving forward, unless made explicit, we suppress the return argument of the next state, i.e. we utilize the shorthand $\thetait{i}{t+1} :=\Alg(\thetait{i}{t}, \hist{i}{t}, \grad{i}{t}; \phi)$.

\subsection{Meta-Learning Loop}\label{ssec:metatraining_loop}
The meta-learning loop wraps around the inner loop. It aims to find optimal meta-parameters $\phi^\star$ that ensure that for each task $\mathcal{T}_i$ in distribution $\mathcal{D}$, the inner loop optimizer $\Alg$ produces $\theta_i^\star$ that performs well on metric $l_i^{\textrm{val}}(\theta_i^\star)$:

\begin{align}\label{eq:training}
    \min_{\phi} E_{\task\sim\mathcal{D}}[ \lossv(\theta_i^\star(\phi))]
\end{align}
where $E_{\task \sim\mathcal{D}}$ denotes the expectation over task distribution $\mathcal{D}$. An empirical version of this meta-learning process with training tasks $\tasks$ can be formulated as

\begin{align}\label{eq:meta_opt}
    \phi^\star &= \argmin_{\phi} \frac{1}{N}\sum_{i=1}^N \lossv(\theta_i^\star) \nonumber\\
    &= \argmin_{\phi} \frac{1}{N}\sum_{i=1}^N \lossv(\argmin_{\theta\in\Theta} \losst(\theta)) \nonumber\\
    &\approx \argmin_{\phi} \frac{1}{N}\sum_{i=1}^N \lossv(\Alg_T(\thetait{i}{1}, \hist{i}{1}; \phi)), \\
    &\defeq  \argmin_{\phi} \losstotal(\phi)
    %&\approx \argmin_{\phi} \sum_{i=1}^n l_i^{\textrm{val}}(\Alg(\theta_{T-1}, \nabla_{\theta}l_i(\theta_{T-1}); \phi))
\end{align}
where the inner optimization is approximated by running algorithm $\Alg$ for $T$ time steps. Optimization \eqref{eq:meta_opt} can be approximated by another iterative gradient-based scheme that estimates $\nabla_{\phi} \lossv(\theta_i^\star)$. This requires differentiation through the inner loop update rule 
$\Alg$ with respect to meta-parameters $\phi$. More specifically, we require differentiation with respect to $\phi$ through a trajectory of parameter updates with horizon $T$. The meta-parameters will then be updated using a meta-optimizer of choice that uses first-order information on the meta-parameters:

\begin{align}\label{eq:meta_update_rule}
\phit{t+1} :=\MetaAlg\left(\phit{t}, \nabla_{\phi} \losstotal\left(\phit{t}\right)\right).
\end{align}

\begin{remark}\label{remark:practicalmetalearning}
Note that an approximated attempt at meta-learning is ubiquitous in practice. More specifically, the notion of hyperparameter selection (e.g. learning rate) for a first-order method is an instance of approximated meta-learning. In this context, we let hyperparameters be viewed as meta-parameters. Given a task, the goal in hyperparameter selection is to identify these such that the algorithm $\Alg$ generates $\theta_i^\star$ with low $\lossv(\theta_i^\star)$. In practice, the selection of hyperparameters (i.e. $\MetaAlg$) is restricted to crude rules of thumb or grid search guided by previous experience of similar problems. It is clear how such approximations can often fall short especially when considering high-dimensional or even continuous meta-parameter search spaces. Moreover, it does not accommodate parameterizing $\Alg$ to describe novel update rules. The meta-learning framework in \eqref{eq:meta_opt} generalizes the hyperparameter selection problem and makes it more rigorous.
\end{remark}

\subsection{Meta-Training}\label{ssec:meta-training}
The meta-training algorithm for an arbitrary optimizer $\Alg$ with meta-parameters $\phi$ is presented in Algorithm \ref{alg:method}. For each meta-parameter update, average validation losses across training tasks $\tasks$ are accumulated in $\losstotal$. For each task $\task$, these validation losses are measured after running the inner loop optimization using $\Alg(\cdot; \phi)$ for $T$ iterations. $\MetaAlg(\cdot, \cdot)$ then uses first-order information on $\losstotal$ with respect to $\phi$ to update the meta-parameters.
\begin{remark}\label{remark:horizon}
    For a specific task $\task$, the role of the meta-optimizer can be viewed as trying to learn the loss landscape of the inner problem locally around $\thetait{i}{t}$ for $t\in[T]$ and adapt the optimizer accordingly to encourage efficient descent. Thus, the updates within the inner loop help the meta-optimizer get a better gauge of the loss landscape. In turn, $T$ should be selected based on how complicated or spurious the inner problem's loss landscape is. For more complicated inner problems, more information (i.e. updates) are necessary to gauge the loss landscape. For simpler problems, a smaller horizon should suffice.
\end{remark}
\begin{remark}\label{remark:stochastic}
    Algorithm \ref{alg:method} allows for flexibility when choosing $\MetaAlg$. Stochastic first-order methods can be employed to solve the meta-training problem. That is, rather than using the entire batch of $N$ tasks $\tasks$, a random minibatch can be selected to perform a meta-parameter update. This strategy becomes particularly useful in settings where $N$ is prohibitively large. Furthermore, adding stochasticity in the $\MetaAlg$ procedure may reap some known benefits of SGD such as not succumbing to local minima. 
\end{remark}

\begin{algorithm}[tb]
   \caption{Meta-Training Framework}
   \label{alg:method}
\begin{algorithmic}
   \STATE {\bfseries Input:} Training set consisting of $N$ tasks $\tasks$
   \STATE {\bfseries Design choices:} Inner loop horizon $T$, meta-training epochs $M$, optimizer $\Alg(\cdot; \phi)$, meta-optimizer $\MetaAlg(\cdot, \cdot)$
   \STATE {\bfseries Return:} Meta-parameters $\phi$
   \vspace*{\baselineskip}
   \STATE {\bfseries begin training}
   \STATE 1. Initialize meta-parameters $\phi^{(1)}$
   \STATE 2. Initialize inner loop parameters and initial optimizer states $\{\thetait{i}{1}, \hist{i}{1}: i\in[N]\}$
   \FOR{$k\in[M]$}
   \STATE 3. Initialize $\losstotal \leftarrow0$
   \FOR{$i\in[N]$}
   \FOR{$t\in[T]$}
   \STATE 4. Compute inner loop gradient $\grad{i}{t}\leftarrow  \nabla_{\theta}l_i(\thetait{i}{t})$
   \STATE 5. $(\thetait{i}{t+1}, \hist{i}{t+1}) \leftarrow\Alg(\thetait{i}{t}, \hist{i}{t}, \grad{i}{t}; \phi)$
   \ENDFOR{}
   \STATE 6. $\losstotal \leftarrow \losstotal + \lossv(\thetait{i}{T+1})/N$
   \ENDFOR{}
   \STATE 7. $\phit{k+1} \leftarrow\MetaAlg(\phit{k}, \nabla_{\phi} \losstotal)$
   \ENDFOR{}
   \STATE {\bfseries end training} 
\end{algorithmic}
\end{algorithm}

\section{Differentiable Convex Optimizers}\label{sec:dco_meta}
We now propose various instantiations of the the inner loop optimization step \eqref{eq:update_rule} as differentiable convex optimizations. More generally, our proposed DCO meta-learning framework parameterizes optimizer $\Alg$ as a DCO introduced in \eqref{eq:dco}:
\begin{align}\label{eq:dco_meta}
    \Alg(\cdot; \phi):= D(\cdot; \phi).
\end{align}
As discussed in Section~\ref{ssec:firstordermethods}, this formulation contains a range of well-known first-order update rules as special cases.

To demonstrate the representational capacity of general DCOs as formulated in \eqref{eq:dco} within the meta-learning context, we focus on the subclass of unconstrained differentiable QPs. Note that this is a narrower subclass of DCO as we no longer have an arbitrary convex objective but rather a convex quadratic one. However, as we will demonstrate, this narrower formulation lends itself naturally to generalize the structure of update rules of existing gradient-based methods. While the formulations themselves admit closed-form solutions, we treat these as convex optimizations in our implementations to stay true to the DCO framework.  

\subsubsection{DCO Gradient}\label{sssec:DCOg}
We propose DCO Gradient based on the convex optimization \eqref{eq:gd_update} that encodes the vanilla GD update rule. The formulation discards the optimizer state $\hist{i}{t}$ and simply encodes the update rule:
\begin{align}\label{eq:DCO_gradient}
    \thetait{i}{t+1} \defeq \argmin_{\theta} \{\big(\grad{i}{t}\big)^\top \theta + \frac{1}{2}||\Lambda \odot (\theta - \theta_i^{(t)})||_2^2\},
\end{align}
where the parameterization is given by $\phi:=\Lambda\in\R^p$. In this formulation, learning the parameter $\Lambda$ can be viewed as optimizing the per-weight learning rate within vanilla GD.

\subsubsection{DCO General Descent (DCOGD)}\label{sssec:DCOgd}
We introduce a generalization of the previous approach that enables a general linear transformation of the update gradient:
\begin{align}\label{eq:DCO_general_descent}
    \thetait{i}{t+1} &\defeq \argmin_{\theta} \{\big( B \grad{i}{t} \big) ^\top \theta + \frac{1}{2} ||\theta - \theta_i^{(t)}||_2^2\},
\end{align}
where $\phi:=B\in\R^{p\times p}$.
\subsubsection{DCO Momentum (DCOM)}\label{sssec:DCOm}
Finally, we extend formulation \eqref{eq:DCO_gradient} to include momentum information:
\begin{align}\label{eq:DCO_momentum}
    \hist{i}{t+1} &\defeq M \odot \grad{i}{t} + (\ones - M) \odot \hist{i}{t}, \\
    \thetait{i}{t+1} &\defeq \argmin_{\theta} \{ \big(\hist{i}{t+1}\big) ^\top \theta + \frac{1}{2} ||\Lambda \odot (\theta - \theta_i^{(t)})||_2^2\},
\end{align}
where the parameterization is given by $\phi:=\{\Lambda, M\in\R^p\}$. Here, the DCO learns both the learning rate and the momentum averaging mechanism on a per-weight basis.

\section{Theory} \label{sec: theory}

We illustrate the potential of the DCO framework by analyzing the meta-learner process for a class of linear least-squares problems. Specifically, we let the tasks $\task$ be the least-squares problems
\begin{align} \label{eq:shiftls}
    \min_{\theta \in \R^p} \|X\theta - (y + X \Delta_i) \|_2^2,
\end{align}
where $X$ is a \emph{fixed} feature matrix with $X^{\top} X$ invertible and the regression targets vary using task-specific $\Delta_i$. We restrict our task-dependent regression target shifts to lie in the range space of $X$ for theoretical tractability and concreteness: note that each task assumes a shifted version of the same loss landscape, with the optimal weights also shifted by $\Delta_i$.

Namely, let $\theta' = (X^{\top} X)^{-1} X^{\top} y$ be the typical least-squares solution to \eqref{eq:shiftls} in the case where $\Delta_i=0$. It is then clear by inspection that $\theta^*_i = \theta' + \Delta_i$, and that the minimizing loss $\losst(\theta_i^*)$ is invariant to $i$; we thus denote the solution to \eqref{eq:shiftls} by $\lossopt$. Note that here we consider the case where training and validation data are identical for a particular task; i.e. $\losst = \lossv$. With some abuse of notation, our $k^{\textrm{th}}$ meta-optimization step target task loss
\begin{align} \label{eq:shiftlstarget}
   \lossttargetit \defeq \lossttarget(\Alg_1(\thetait{}{1}; \phit{k})) 
\end{align}
consists of an identically constructed task \eqref{eq:shiftls} with a distinct $\Delta$ and $\thetait{}{1}$. Concretely, we consider the performance on the target loss after one inner loop optimizer step using the meta-parameters $\phit{k}$ obtained from $k$ meta-optimization steps. Naturally, we expect that increasing both the number of meta-optimization steps $k$ and the number of training tasks $N$ should help reduce the target loss, ideally such that $\lossttargetit$ approaches the optimum $\lossopt$. This is formalized in Theorem~\ref{thm:main}.

\begin{theorem} \label{thm:main}
Consider executing Algorithm~\ref{alg:method} with the DCOGD optimizer \eqref{eq:DCO_general_descent} and $T=1$ on the set of shifted least-squares problems $\tasks$ introduced in \eqref{eq:shiftls}, each with an arbitrary but fixed initial parameter $\thetait{i}{1} \in \R^p$. Instantiate $\MetaAlg$ as standard GD with step size $\eta > 0$. Finally, define the set of vectors $\{Z_i\}_{i=1}^N$ by
\begin{align*}
    Z_i &\defeq \thetait{i}{1} - \Delta_i - \theta'.
\end{align*}
Then if $\{Z_i\}_{i=1}^N$ span $\R^p$, there exists a sufficiently small $\eta$ such that the one-step target task loss \eqref{eq:shiftlstarget}  approaches the optimum as the number of meta-steps $k \rightarrow \infty$; specifically,
\[
    \lossttargetit - \lossopt \leq O((1 - \epsilon)^k),
\]
for some $0 < \epsilon < 1$.
\end{theorem}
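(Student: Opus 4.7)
The plan is to exploit the quadratic structure of the problem to reduce the theorem to linear convergence of gradient descent on a strongly convex quadratic in $B$. First I would compute the one-step DCOGD update in closed form: writing $A = X^\top X$, the task gradient at $\thetait{i}{1}$ is $\grad{i}{1} = 2 A Z_i$, so $\thetait{i}{2} = \thetait{i}{1} - 2BAZ_i$ and hence $\thetait{i}{2} - \theta^*_i = (I - 2BA) Z_i$. Using the normal equations $X^\top(X\theta' - y) = 0$ to kill the cross term, the per-task excess loss becomes
\begin{align*}
\losst(\thetait{i}{2}) - \lossopt = \|X(I-2BA)Z_i\|_2^2 = Z_i^\top E^\top A E Z_i,
\end{align*}
where $E \defeq I - 2BA$. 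Averaging over the $N$ tasks,
\begin{align*}
\losstotal(B) - \lossopt = \operatorname{tr}\!\left(E^\top A E S\right), \qquad S \defeq \tfrac{1}{N}\sum_{i=1}^N Z_i Z_i^\top.
\end{align*}

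Next I would establish that this objective is strongly convex and smooth in $B$. Vectorizing with $v = \matvec(B)$ and using $\matvec(BA) = (A \otimes I) v$ (since $A$ is symmetric), the objective takes the form $f(v) = (\matvec(I) - 2(A\otimes I) v)^\top (S \otimes A)(\matvec(I) - 2(A\otimes I) v)$, whose Hessian is $8\,(ASA \otimes A)$. The spanning hypothesis on $\{Z_i\}_{i=1}^N$ forces $S \succ 0$, and $X^\top X$ being invertible gives $A \succ 0$; thus $ASA \otimes A \succ 0$ and the Kronecker-product eigenvalues $8\lambda_j(ASA)\lambda_k(A)$ furnish strong convexity constant $\mu > 0$ and smoothness constant $L$. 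Moreover the minimizer $B^* = \tfrac12 A^{-1}$ attains $f(B^*) = 0$, so $\losstotal(B^*) = \lossopt$.

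Then I would invoke the standard linear convergence result for GD on a strongly convex quadratic: for any $\eta \in (0, 2/L)$, the iterates produced by $\MetaAlg$ satisfy $\losstotal(\phit{k}) - \lossopt \leq (1-\epsilon)^{k-1}(\losstotal(\phit{1}) - \lossopt)$ for some $\epsilon \in (0,1)$ depending on $\eta\mu$ and $\eta L$; choosing $\eta$ sufficiently small guarantees such an $\epsilon$ exists.

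The final step, which I expect to be the most delicate, is transferring this training-loss decay to the \emph{target} task. Here the target excess loss $\lossttargetit - \lossopt = Z_{\textrm{target}}^\top E_k^\top A E_k Z_{\textrm{target}}$ depends on an arbitrary new $Z_{\textrm{target}}$, not the training $\{Z_i\}$. I would control it by $\lambda_{\max}(A)\,\|E_k\|_2^2\,\|Z_{\textrm{target}}\|_2^2$ and then bound $\|E_k\|_2^2$ from below via the training loss using the chain
\begin{align*}
\losstotal(\phit{k}) - \lossopt \;=\; \operatorname{tr}(A E_k S E_k^\top) \;\geq\; \lambda_{\min}(A)\,\lambda_{\min}(S)\,\|E_k\|_F^2 \;\geq\; \lambda_{\min}(A)\,\lambda_{\min}(S)\,\|E_k\|_2^2,
\end{align*}
where $\lambda_{\min}(S) > 0$ is exactly where the spanning hypothesis is used (without it, $E_k$ could remain nontrivial on $\ker(S)$ while $f$ vanishes). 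Combining these yields $\lossttargetit - \lossopt \leq C\,(\losstotal(\phit{k}) - \lossopt) \leq O((1-\epsilon)^k)$ with $C$ depending on $\|Z_{\textrm{target}}\|_2$, $\lambda_{\max}(A)$, $\lambda_{\min}(A)$, and $\lambda_{\min}(S)$, completing the proof.
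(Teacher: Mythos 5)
Your proposal is correct and follows the same core strategy as the paper's proof: reduce the $T=1$ meta-objective to a quadratic in $B$, show it is strongly convex by vectorizing and exhibiting a Kronecker-structured Hessian that is positive definite exactly when the $Z_i$ span $\R^p$ (your $8(ASA\otimes A)$ with $S=\frac1N\sum_i Z_iZ_i^\top$ matches the paper's $\sum_i \grad{i}{1}(\grad{i}{1})^\top\otimes 2X^\top X$ up to scaling, since $\grad{i}{1}=2AZ_i$), identify the minimizer $B^*=\tfrac12(X^\top X)^{-1}$ achieving $\lossopt$, and invoke standard linear convergence of GD. The one place you genuinely diverge is the last step: the paper transfers convergence to the target task by arguing $\|B^{(k)}-B^*\|_2^2\leq O((1-\epsilon)^k)$ and then using Lipschitzness of $\lossttarget(B)$ on a bounded set around $B^*$, whereas you bound the target excess loss directly by the training excess loss via $\lossttargetit-\lossopt\leq\lambda_{\max}(A)\|Z_{\textrm{target}}\|_2^2\|E_k\|_2^2$ and $\losstotal(\phit{k})-\lossopt\geq\lambda_{\min}(A)\lambda_{\min}(S)\|E_k\|_2^2$. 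Your route is more quantitative: it produces an explicit constant, gives the $(1-\epsilon)^k$ rate directly in function value (the paper's Lipschitz argument, taken literally from the squared parameter-space rate, yields $(1-\epsilon)^{k/2}$, which still fits the stated big-O only after redefining $\epsilon$), and makes transparent that the spanning hypothesis, through $\lambda_{\min}(S)>0$, is precisely what prevents $E_k$ from surviving on a subspace invisible to the training tasks. Both arguments are valid; the paper's is shorter, yours is tighter in the final bookkeeping.
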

\begin{proof}
We can solve the gradient update from \eqref{eq:DCO_general_descent} in closed form. Doing this yields the following weight vector the $i$th task after one step on the inner problem:
% Should divide the l2 proximity term by 1/2
% Also should probably normalize total loss by 1/N
\begin{align}
    \thetait{i}{2} \defeq \thetait{i}{1} - B \grad{i}{1}.
\end{align}

The total loss $\losstotal$ with $T=1$ can therefore be written as:
\begin{align}
    \losstotal &= \frac{1}{N} \sum_{i=1}^N \| X (\thetait{i}{1} - B \grad{i}{1}) - (y + X \Delta_i) \|_2^2 \nonumber \\
   &= \frac{1}{N} \sum_{i=1}^N \| X B \grad{i}{1} - X \thetait{i}{1} + y + X \Delta_i \|_2^2.  \label{eq:shiftlsloss}
\end{align}

The meta-learning problem aims to minimize this loss over the meta-parameter $\phi = B$. We will proceed with two steps: (1) show that there exists a meta-parameter $B^*$ for which $\losstotal$ equals the optimal minimizer $l^*$, and (2) show that $B^*$ is attained by our meta-learning procedure.

The existence of such a minimizing $B^*$ can be shown directly by letting $B^* = (1/2) (X^{\top} X)^{-1}$. Noting that
\[
    \grad{i}{1} = 2X^{\top} X (\thetait{i}{1} - \Delta_i) - 2X^{\top} y
\]
from differentiation of \eqref{eq:shiftls}, we can substitute $B^*$ and $\grad{i}{1}$ into \eqref{eq:shiftlsloss} to yield:
\begin{align}
    \losstotal = \frac{1}{N} \sum_{i=1}^N \| (X^{\top} X)^{-1} X^{\top} y - y \|_2^2 = \lossopt.
\end{align}

We now show that $B^*$ is attained by the meta-learning procedure. Namely, we consider our total loss for each outer meta-learning step in Algorithm~\ref{alg:method} to be a function $\losstotal(B)$ of our meta-parameter $\phi=B$. We let $\lossttarget(B)$ be defined similarly. It is easy to verify that the gradient $\nabla_{B} \losstotal$ is Lipschitz in $B$; therefore, we aim to show strong convexity of $\losstotal$ in $B$ to complete the proof using standard convex optimization results.

For convenience, define $y_i' \defeq -X \thetait{i}{1} + y + X \Delta_i$. Substituting into \eqref{eq:shiftlsloss}, we want to show that the following is strictly convex:
\[
    \losstotal(B) = \frac{1}{N} \sum_{i=1}^N \| X B \grad{i}{1} + y_i' \|_2^2.
\]

Expanding the square, scaling, and dropping terms which are linear in $B$ and thus do not affect convexity, we have that $\losstotal(B)$ is strictly convex iff $f(B)$ is strictly convex, where
\[
    f(B) = \sum_{i=1}^N (\grad{i}{1})^{\top} B^{\top} X^{\top} X B \grad{i}{1}.
\]

With some abuse of notation, we aim to compute the Hessian of $f(B^v)$ with respect to the vectorized $B^v = \matvec(B)$. Using standard matrix calculus identities \cite{Petersen2008} gives
\begin{align*}
    \frac{\partial f}{\partial B} &= 2 \sum_{i=1}^N X^{\top} X B \grad{i}{1} (\grad{i}{1})^{\top} \\
   &= 2 X^{\top} X B \sum_{i=1}^N \grad{i}{1} (\grad{i}{1})^{\top}.
\end{align*}

In order to compute the Hessian, we need to express $\matvec(\frac{\partial f}{\partial B}) = \frac{\partial f}{\partial B^v}$. Using the standard identity $\matvec(ABC) = (C^{\top} \otimes A) \ \matvec(B)$ yields
\begin{align*}
    \frac{\partial f}{\partial B^v} &= \left(\sum_{i=1}^N \grad{i}{1} (\grad{i}{1})^{\top} \otimes 2 X^{\top} X \right) B^v.
\end{align*}

Note that the gradient of $f$ with respect to $B^v$ is now linear in $B^v$. It is therefore immediate that that our desired Hessian is a constant matrix
\begin{align*}
    \nabla^2_{B^v} f &= \sum_{i=1}^N \grad{i}{1} (\grad{i}{1})^{\top} \otimes 2 X^{\top} X.
\end{align*}

Note that the Kronecker product of positive definite matrices is positive definite. Since $X^{\top} X \succ 0$ by assumption, $\nabla^2_{B^v} f \succ 0$ (and therefore $\losstotal(B)$ is strictly convex) if $\sum_{i=1}^N \grad{i}{1} (\grad{i}{1})^{\top} \succ 0$. This occurs iff the set of vectors $\{\grad{i}{1}\}_{i=1}^N$ spans $\R^p$. Invertibility of $X^{\top} X$ implies that this is equivalent to the collection of vectors $\{Z_i\}_{i=1}^N$ spanning $\R^p$, where $Z_i$ is as defined in the theorem statement.

Therefore $\nabla^2_{B^v} f \succ 0$, and $\losstotal(B)$ is strongly convex. Letting $B^{(k)}$ denote the values of the meta-parameters after $k$ iterations of GD, by standard convex optimization results \cite[Theorem~3.6]{garrigos2023handbook} we have that for a sufficiently small step size $\eta$,
\begin{align*}
    \| B^{(k)} - B^* \|_2^2 \leq O\left((1 - \epsilon)^k \right),
\end{align*}
where $0 < \epsilon < 1$. As $\lossttarget(B)$ is Lipschitz on any bounded set around $B^*$, the linear convergence in parameter space implies linear convergence in value, and we have shown the desired statement.
\end{proof}

Note that the condition that $\{Z_i\}_{i=1}^{\infty}$ span $\R^p$ is satisfied almost surely for typical random initializations of $\thetait{i}{1}$. Theorem~\ref{thm:main} can thus be interpreted as follows: provided at least $N=p$ sensibly initialized meta-training tasks, the meta learner will eventually learn to solve any target task to arbitrary precision with \emph{exactly one} inner-loop gradient descent step. This is an interesting formal guarantee that suggests the expressive power of our DCO meta-learning framework. While this section focuses on a particularly simple and tractable family of shifted least-squares problems as a proof-of-concept, we expect that the DCO meta-learning framework provides a tractable avenue for more sophisticated convergence results.
\section{Experiments}\label{sec:exp}
We verify the effectiveness of the proposed DCO meta-learning framework on some illustrative tasks. Specifically, we leverage the DCO optimizer instantiations introduced in Section \ref{sec:dco_meta} to solve linear least squares, system identification, and smooth function interpolation tasks.
%Throughout our experiments, we measure performance or effectiveness of a method by the number of epochs used to convergence to the optimal validation loss.

\subsection{Meta-Training Setup}\label{ssec:meta_tr_instantiation}
%We now outline how we instantiated Algorithm \ref{alg:method} to effectively train the DCO optimizers introduced in Sections \ref{sssec:DCOg}, \ref{sssec:DCOgd}, \ref{sssec:DCOm}.

Meta-parameters $\phi$ were initialized such that the DCO optimizers resemble existing first-order update rules. $\Lambda$ and $M$ were set to constant vectors in \eqref{eq:DCO_gradient} and \eqref{eq:DCO_momentum} to mimic the GD update rules introduced in Section \ref{ssec:firstordermethods}. Similarly, we initialized $B$ as the identity matrix in formulation \eqref{eq:DCO_general_descent}.
    
One potential challenge in training DCO optimizers is in ensuring that the proposed formulations remain well-posed for the entirety of the unrolling of the computational graph represented by Algorithm \ref{alg:method}. While the formulations as unconstrained QPs are by themselves well-posed, from a practical viewpoint potentially ill-posed inputs need to be handled. This is especially true for the initial meta-training epochs, where suboptimal meta-parameters may give rise undesirably small or large inner loop gradients. This was overcome by normalizing inner loop gradients before feeding them into the DCO optimizers.

In our experiments we set $T=1$ with $T$ as defined in Algorithm \ref{alg:method}. Restricting $T$ has the effect of explicitly training the DCO optimizer to perform an aggressive inner loop descent step. From a compututational standpoint, this restriction of $T$ allows to reallocate compute resources from solving several DCOs in the inner loop to performing more meta-parameter updates.

Note that Algorithm \ref{alg:method} allows for any first-order meta-optimizer to perform updates on $\phi$. For simplicity we restrict ourselves to using RMSProp with default hyperparameter settings as suggested in the \texttt{PyTorch} library. 

The DCO optimizers were implemented on a 2.2 GHz single-core CPU using the \texttt{CVXPYLayers} library \cite{cvxpylayers2019} and were solved using general-purpose interior-point solvers. While the implementation could be made more efficient, it suffices to outline the potential of the DCO meta-learning framework to outperform existing first-order baselines.

Throughout the experiments a comparison baseline of first-order methods Adam, SGD and RMSProp was considered due to their prevalence in solving unconstrained minimizations. For each baseline optimizer the learning rate was tuned and the best validation performance was reported.

\subsection{Least Squares Task}\label{ssec:ls}
 We first focus on solving least-squares (LS) problems
\begin{align}\label{eq:ls_problem}
    \min_{\theta\in \R^{100}} \|X\theta - y\|_2^2,
\end{align}
where $X\in\R^{100\times 100}$, $y\in\R^{100}$ with $X_{ij}, y_i\sim \mathcal{N}(0, 1)\ \forall i,j\in [100]$. The meta-training set was constructed by sampling 100 tasks according to \eqref{eq:ls_problem}. For each task, a LS objective was sampled which acts as both as $l_i$ and $\lossv$, i.e. $l_i=\lossv$ for $i\in[100]$. Meta-training was run for $M=20$ epochs. Then 100 new tasks were sampled and the evolution of the average loss across tasks over 30 training epochs was compared with existing first-order methods.  Figure \ref{fig:LS} shows the results. The DCO optimizers exhibit substantially faster convergence compared to classical baselines.

\begin{figure}
\centering
\includegraphics[width=0.4\textwidth]{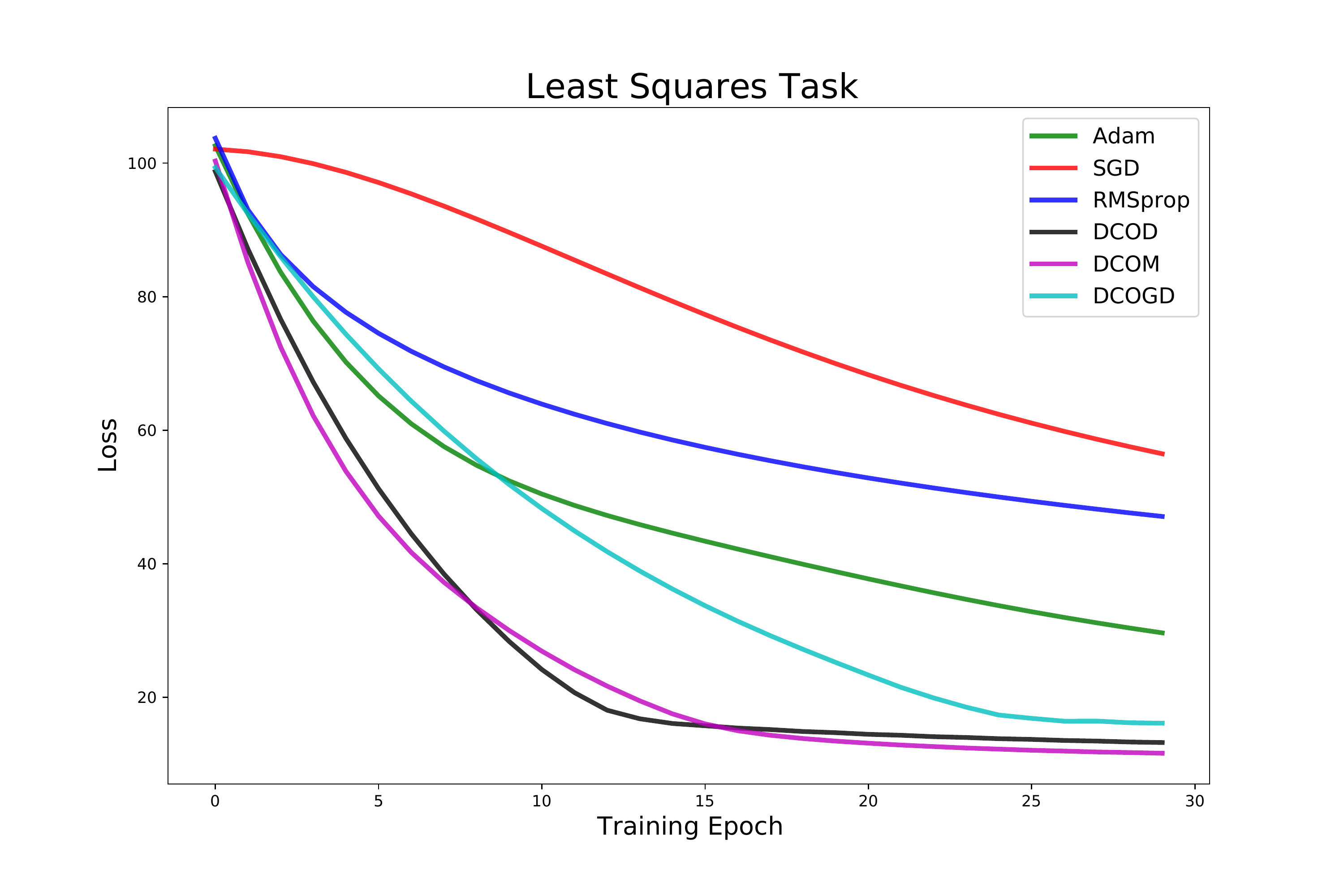}
\caption{Optimization performance on 100-dimensional LS tasks. Validation curves are averaged across 100 new tasks.}
\label{fig:LS}
\end{figure}

\subsection{System Identification Task}\label{ssec:system_id}
Next, we consider the task of identifying the underlying nonlinear discrete-time dynamics for population growth. We approximate the Beverton–Holt model given by
\begin{align}
    n_{t+1} = f(n_t):=\frac{R_0 n_t}{K+n_t},
\end{align}
where $n_t$ represents the population density in generation $t$, $R_0>0$ is the proliferation rate per generation, and $K>0$ is the carrying capacity of the environment. To introduce stochasticity into the model we include additive disturbance $d \sim \mathcal{N}(0, 0.1)$. In this context, we define a particular task by sampling a system with $R_0, K\sim \mathcal{U}(1, 2)$ and then generating training and validation samples $\{n, f(n)\}$ with $n\sim \mathcal{U}(0, 10)$. For each task we sample 500 training points and 100 validation points. The goal is to learn the underlying discrete nonlinear dynamics using a feedforward architecture with design ($1$-$5$-$5$-$1$), i.e. 2 hidden layers with 5 units each. The training of each network is carried out on the training set sampled for each task, and the final performance for that task is measured using the mean-square error (MSE) metric on the associated validation set. Meta-training was run for $M=20$ epochs. Figure \ref{fig:SI} presents the performance comparison between considered methods on 100 newly sampled tasks. The DCO optimizers continue to outperform baselines.

\begin{figure}
\centering
\includegraphics[width=0.4\textwidth]{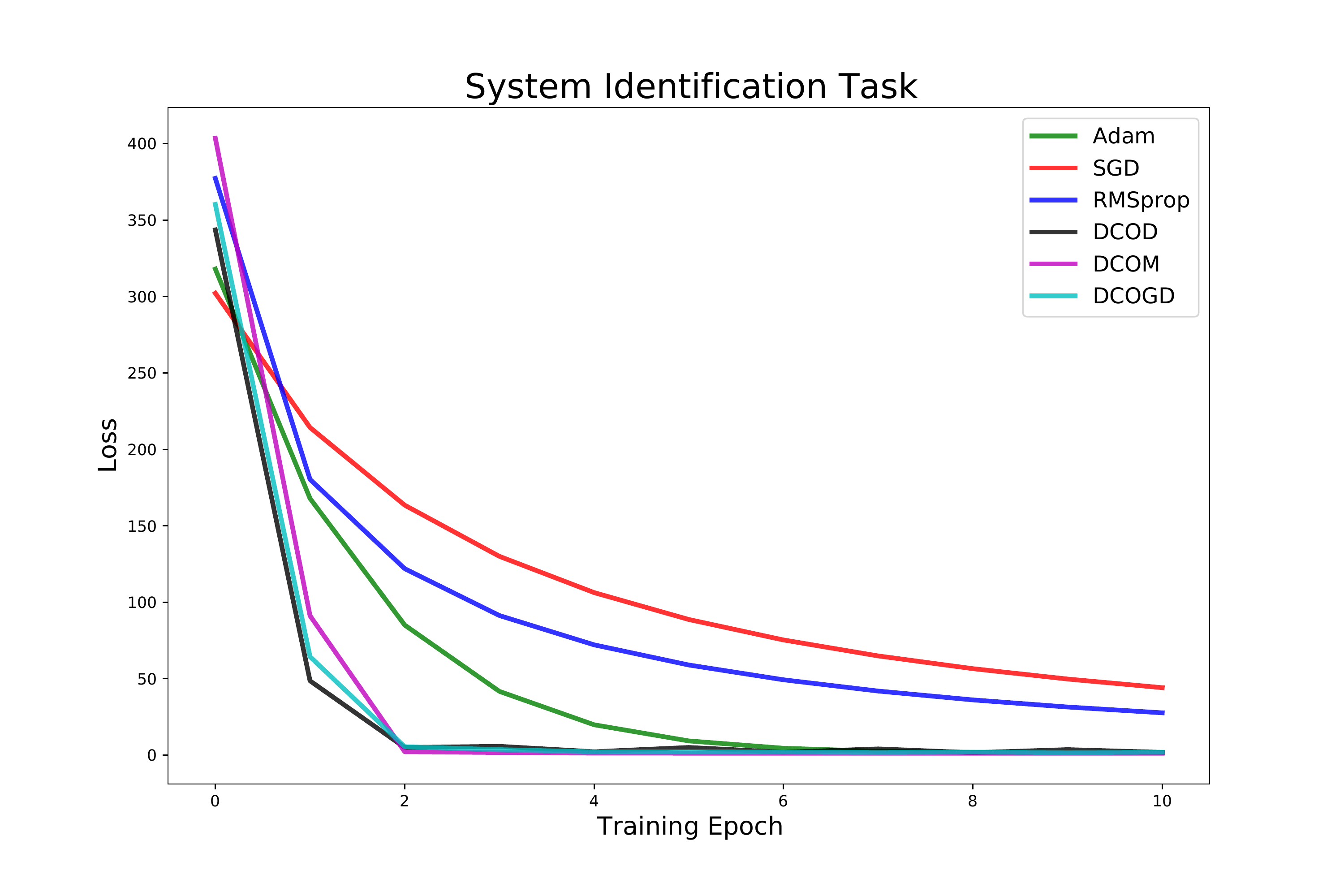}
\caption{Optimization performance on approximating the Beverton–Holt dynamics. Validation curves are averaged across 100 new tasks.}
\label{fig:SI}
\end{figure}

\subsection{Smooth Function Interpolation Task}\label{ssec:function_interpolation}
We finally consider the task of interpolating a real, nonlinear, smooth, univariate function via regression. As an illustrative example, we consider the smooth function
\begin{align}\label{eq:smoothfn}
    g(x) = a\cos(b x)\textrm{exp}(-c|x|)
\end{align}
where $a, b, c \sim \mathcal{U}(0, 1)$. A particular task is constructed by sampling 500 training points and 100 validation points from an instance of $g(x)$. The goal of the task was to learn a feed-forward network (FFN) with architecture ($1$-$10$-$10$-$1$) consisting of $2$ hidden layers with $10$ units each that yields low validation loss. As before, meta-training was run for $M=20$ epochs. The performance comparison with first-order methods on a new set of tasks is shown in Figure \ref{fig:FI}. The validation loss learning curves are averaged over 10 tasks. Similar to previous settings, we have obtained an improved convergence of DCO optimizers over baselines.
\begin{figure}
\centering
\includegraphics[width=0.4\textwidth]{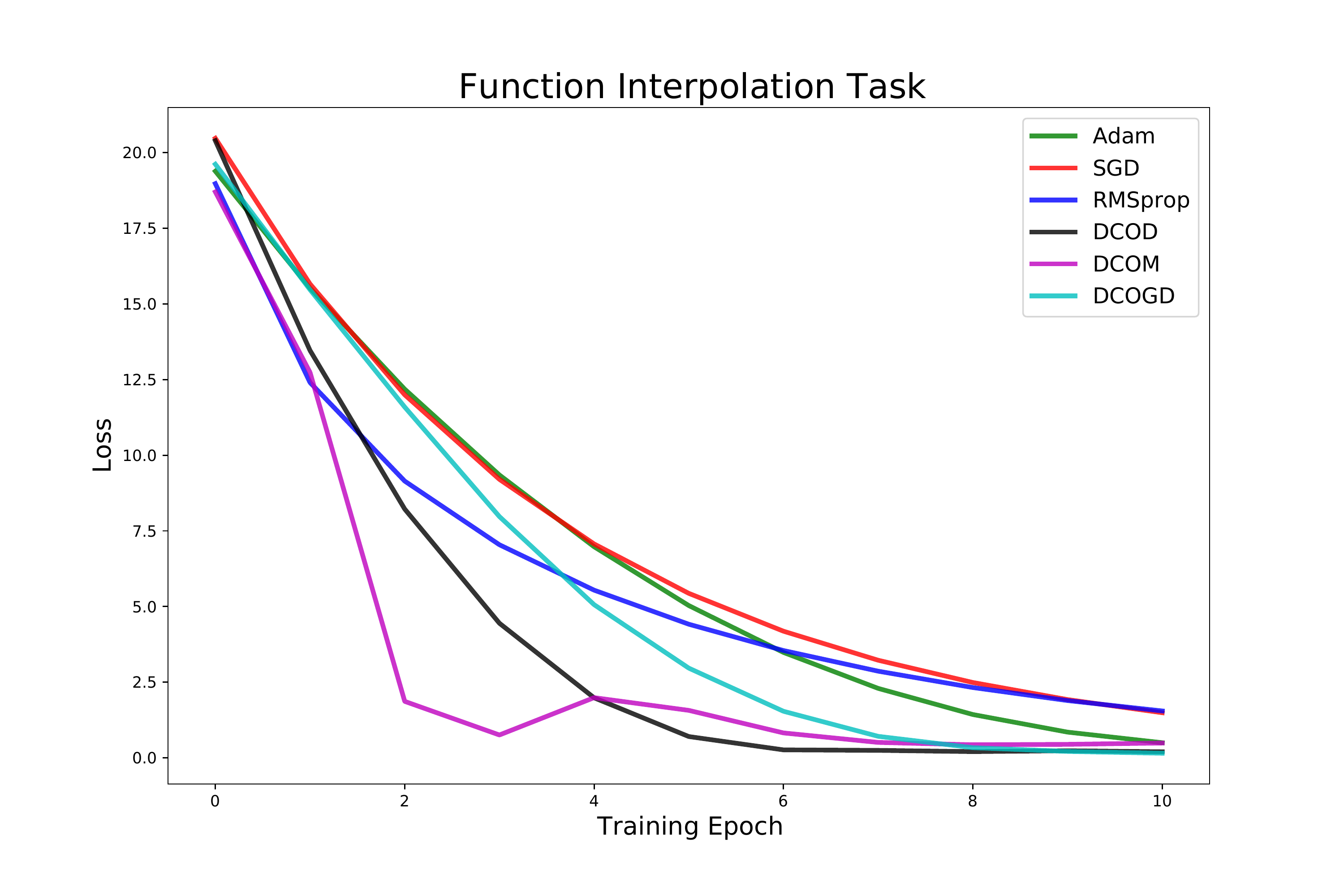}
\caption{Optimization performance on smooth interpolation tasks. Validation curves are averaged across 10 new tasks.}
\label{fig:FI}
\end{figure}

\section{Conclusion}\label{sec:conclusion}
This work introduces a novel DCO-based approach for optimizer design within the context of meta-learning. The DCO meta-learning framework remains loyal to the inherent convex nature of existing first-order update rules. We demonstrate that DCO-based optimizers not only generalize existing first-order methods but also have the potential of representing novel update rules. Theoretically, we show rapid convergence to the optimal update rule when meta-training DCOGD optimizers for a family of linear least-squares tasks. Experimentally, we demonstrate faster convergence of the DCO instantiations as compared to existing first-order methods on a range of illustrative tasks. Exciting future work involves finding a more general instantiation of DCO optimizers and scaling this approach to more complex networks.

\bibliographystyle{IEEEtran}
\bibliography{References}
%\printbibliography

\end{document}